\newcommand{\E}{\mathrm{E}}
\newcommand{\I}{\mathrm{I}}
\newcommand{\R}{\mathbb{R}}
\newcommand{\be}{\begin{equation}}
\newcommand{\ee}{\end{equation}}
\newcommand{\bbmat}{\begin{bmatrix}}
\newcommand{\ebmat}{\end{bmatrix}}
\def\bea#1\eea{\begin{align}#1\end{align}}
\newtheorem*{theorem*}{Theorem}
\theoremstyle{plain}
\newtheorem{theorem}{Theorem}[section]
\theoremstyle{definition}
\newtheorem{definition}[theorem]{Definition}
\theoremstyle{remark}
\begin{document}
\onecolumn 
\icmltitle{A Rank Stabilization Scaling Factor for Fine-Tuning with LoRA}



\icmlsetsymbol{equal}{}

\begin{icmlauthorlist}
\icmlauthor{Damjan Kalajdzievski}{equal,Tenyx}
\\\center Tenyx

\end{icmlauthorlist}

\icmlaffiliation{Tenyx}{Correspondence to damjan@tenyx.com.\\ \\Copyright 2023 Tenyx}




\vskip 0.3in



\printAffiliationsAndNotice{}  

\begin{abstract} As large language models (LLMs) have become increasingly compute and memory intensive, parameter-efficient fine-tuning~(PEFT) methods are now a common strategy to fine-tune LLMs. A popular PEFT method is Low-Rank Adapters (LoRA), which adds trainable low-rank ``adapters" to selected layers. Each adapter consists of a low-rank matrix product, multiplicatively scaled by a rank-dependent factor. This scaling factor, which divides adapters by a factor of the rank, results in slowed learning and stunted performance for LoRA with higher-rank adapters. Consequently, the use of LoRA in practice has generally been limited to very low ranks. In this work, we study the impact of the scaling factor on the learning process and prove that LoRA adapters should be divided by a factor of the square root of the rank. Modifying LoRA with the appropriate scaling factor, which we call the rank-stabilized LoRA (rsLoRA) method, easily provides for a fine-tuning compute/performance trade-off, where larger ranks can be used to trade off increased computational resources during training for better fine-tuning performance, with no change in inference computing cost.
\end{abstract}

\section{Introduction}
Large language models (LLMs) have become increasingly capable in the domain of natural language processing \cite{foundationmodels}. 
They have been successful in a wide variety of applications ranging from machine translation \cite{llmtranslationstudy}, disease prediction \cite{medllm}, generating code for robotics control policies \cite{llmrobotcode}, to chat-bot assistants \cite{rlhf}. 
While their inherent generalization capability is impressive, performance on down-stream tasks often requires fine-tuning \cite{deltatune}, which induces substantial computational resource requirements.

To address these problems, a multitude of fine-tuning approaches have recently been introduced for computationally-efficient training \cite{adapters,lora, bitfit,ia3,lokr,loha,deltatune}. These methods seek to optimize a reduced set of parameters while achieving comparable performance to full model fine-tuning. Of particular relevance for this paper is the method of Low-Rank Adapters (LoRA), in which ``adapters", consisting of a low-rank matrix product multiplied by a scaling factor, are added to a subset of parameter matrices of the pre-trained model to be optimized during fine-tuning.

In this paper we analyze the scaling factor of LoRA adapters. Our analysis proves that LoRA adapters should be divided by a factor of the square root of the rank, as opposed to conventional LoRA implementation in which adapters are divided by a factor of the rank. We call our method with this corrected scaling factor the rank-stabilized LoRA (rsLoRA) method. We experimentally verify the performance and learning stability of rsLoRA in comparison with the standard LoRA method. 
We illustrate that the conventional implementation causes gradient collapse as the rank increases, which slows the learning such that larger ranks perform no different than small ranks. In contrast, with rsLoRA the gradients do not collapse, and training with higher ranks increases performance. 
As such, the rsLoRA method easily provides
for a fine-tuning compute/performance trade off, where higher ranks can be used to trade increased training compute for
better performance. Further, since the adapters take the exact same form as in LoRA, 
there is no change in inference computational cost for different ranks.

\section{Background and Relevant Works}
We first provide an overview of the LoRA method and then follow with the introduction of a framework for studying scaling-initialization-update schemes used in \cite{scalinganalysis}, as our analysis follows a similar approach. 

\subsection{Low-Rank Adapters (LoRA)}
\label{section:lora}

In light of the hypothesis that fine-tuning of pre-trained LLM parameters takes place on a manifold with low intrinsic dimension \cite{peftintrinsicdim,intrinsicdim}, the authors of \cite{adapters} provide an alternative to full model fine-tuning, tuning far fewer parameters with comparable performance.
They introduce the concept of fine-tuning an LLM by fixing all existing pre-trained model parameters while adding an ``adapter" module after each pre-LayerNorm attention or feed-forward sub-module of the transformer. The adapters are composed of a two-layer neural network with a low hidden-layer dimension, such that learning is constrained to a parameter-efficient low-dimensional manifold.

\begin{figure}[h]
    \centering
    \begin{minipage}{1.0\linewidth} 
    \centering
    \adjustbox{trim={0\width} {.18\height} {0\width} {0\height},clip}
{\includegraphics[width=5.5cm]{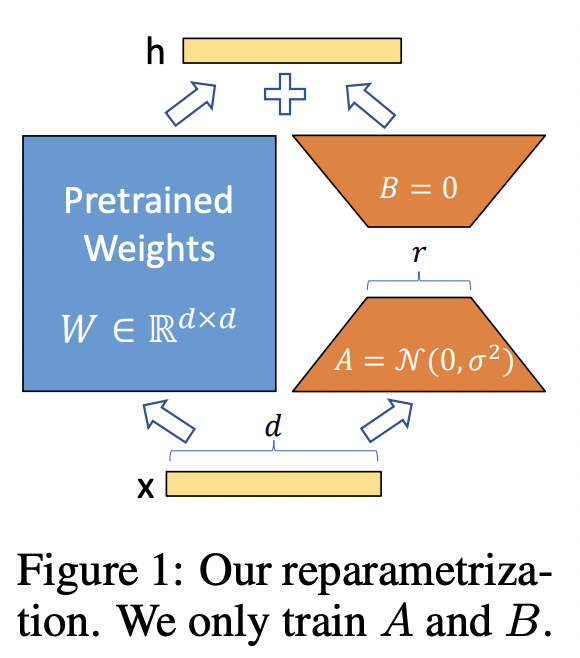}}
    \end{minipage}
    \caption{\textbf{LoRA adapters illustration} \cite{lora}.}
    \label{fig:lora}
\end{figure}

The LoRA method modifies the form of the adapters to be computed in parallel with their associated transformer sub-modules, such that following fine-tuning, they can be combined with the pre-trained parameters for efficient inference. Specifically, a linear sub-module of the pre-trained network with parameters $W\in\R^{d_2\times d_1},b\in\R^{d_2}$, which maps input $x_{\text{in}}\in\R^{d_1}$ as 
\begin{equation}
    x_{\text{out}} = Wx_{\text{in}}+b,
\end{equation}
is augmented 
by the addition of an adapter, consisting of parameters $A\in\R^{r\times d_1}$, $B\in\R^{d_2\times r}$, and a scaling factor $\gamma_r \in \R^{+}$. The resulting LoRA-augmented sub-module is defined by the mapping
\begin{equation}
    x_{\text{out}} = (W+\gamma_rBA)x_{\text{in}}+b.
\end{equation}
After fine-tuning, the single matrix $(W+\gamma_rBA)$ is stored and used in place of $W$, such that during inference there is no additional compute cost to the fine-tuned model.
Note that the adapter $\gamma_rBA$ of the sub-module is constrained to be of rank at most $r$, which is typically set to be much less than $d_1,d_2$. The matrices $B,A$ make up the free parameters to optimize during fine-tuning and are initialized such that $B=0_{d_2\times r}$, and entries of $A$ are iid with mean $0$ and variance 
which is independent of $r$.
The scaling factor $\gamma_r$ is some function of rank $r$ to account for the rank effects on the matrix product $BA$, and in the practiced LoRA method, $\gamma_r$ is set to $\gamma_r=\frac{\alpha}{r}$ for some hyperparameter $\alpha$. 

A follow-on method, AdaloRA \cite{adalora} allocates rank to LoRA adapters dynamically during training based on an available compute budget. It achieves this by parameterizing the adapter matrix product in an approximate SVD representation and iteratively prunes singular values (based on an importance score) to potentially reduce rank at each training step. Since AdaLoRA uses the same $\gamma_r$ as LoRA, while seeking to dynamically allow the use of different rank adapters, optimizing the selection of $\gamma_r$, as proposed in this paper, can improve upon AdaLoRA.

As will be shown, the setting of the scaling factor $\gamma_r$ in LoRA is overly aggressive and causes gradient collapse as the rank increases, which slows the learning such that LoRA fine-tuning using larger ranks performs no different than that with very small ranks. This may have led the authors of \cite{lora} to inaccurately conclude that very low ranks (e.g. 4,8,16) ``suffice" (\cite{lora} Table 6), since rank 64 showed no improvements in training with the same setting of $\alpha$. 

\subsection{Scaling-Initialization-Update Schemes}\label{section:learningscheme}
In order to derive the optimal scaling factor, we carried out a similar learning trajectory analysis to \cite{scalinganalysis}, where we consider the infinite width limit of the hidden dimension $r$.
In \cite{scalinganalysis} the authors study fully-connected multi-layer neural networks in the infinite width limit to analyze and draw conclusions about scaling-initialization-update schemes (which they refer to as abc-parametrizations). They define a scaling-initialization-update scheme for an $L$ layer network with hidden dimensions $d$, composed of weights $W^l$ for $l\leq L$, with the following parametrization of the initialization, learning rate, and parameters:
\begin{equation}\label{eqn:scalinganalysis}
\begin{split}
    &\text{The weights are scaled as }W^l=\frac{1}{d^{a_l}}\left(W^l_{i,j}\right)_{i,j},\\
    &\text{the initialization is }W^l_{i,j}\sim \mathcal{N}(0,1/d^{b_l})\text{ iid for each entry, and}\\
    &\text{the learning rate for updates is }\eta\frac{1}{d^{c}}\text{ for some scalar }\eta.
\end{split}
\end{equation}

They show that standard schemes, which only factor in $d$ to the initialization and set $a_l=c=0$, do not admit stable or non-collapsing learning for larger learning rates with larger $d$. They alleviate this with their scheme which sets $b_l = 1$ for all $l$, $c = 0$, $a_0 =-1/2,a_L = 1/2$, and $a_l = 0$ for all $0< l < L$. To arrive at these conclusions, they analytically solved for the learning dynamics in the linear network case. We take a similar approach to analyze the learning of the adapters of LoRA with respect to the scaling factor, from which we obtain in section \ref*{section:method} theorem \ref{mainthrm} and our rsLoRA method.

\section{rsLoRA: Rank-Stabilized Adapters}\label{section:method}

With LoRA adapters, scaling the matrix product $BA$ with $\gamma_r$ (which depends on rank $r$), affects the learning trajectory. Specifically, one needs to ensure the choice of $\gamma_r$ is ``correct" in the sense that the matrix $\gamma_rBA$ is stable for all ranks throughout the learning process, but that $\gamma_r$ is not overly aggressive so as to collapse or slow down learning (see definition \ref{def:rankstabilized}).
Indeed, if one chooses $\gamma_r=\frac{\alpha}{r}$  there is minimal impact on the fine-tuning loss when varying the rank $r$ (see figure \ref{fig:training}). 
This is not ideal, since learning with higher ranks could offer better performance when larger computational resources are available. Moreover, higher ranks only impose extra cost in training and not inference, as once training is completed the adapters are added to the pre-trained weights to obtain a single matrix $(W+\gamma_rBA)$ which is used in inference.

In order to precisely define what we mean for an adapter to be stable with respect to rank, we define a ``rank-stabilized" adapter as follows:
\begin{definition}\label{def:rankstabilized}
    An adapter $\gamma_rBA$ is \textbf{rank-stabilized} if the following two conditions hold:
    \begin{enumerate}
        \item If the inputs to the adapter are iid such that the $m$'th moment is $\Theta_r(1)$ in each entry, then the $m$'th moment of the outputs of the adapter is also $\Theta_r(1)$ in each entry.
        \item If the gradient of the loss with respect to the  the adapter outputs are $\Theta_r(1)$ in each entry, then the loss gradients into the input of the adapter are also $\Theta_r(1)$ in each entry.
    \end{enumerate}
\end{definition}

Using analysis of the limiting $r\to\infty$ case, we show that the only setting of $\gamma_r$ (modulo an additive constant) which results in rank-stabilized adapters is
\begin{equation}
    \gamma_r= \frac{\alpha}{\sqrt r}
\end{equation}
for some hyperparameter $\alpha$. We call our method, which uses the above setting of $\gamma_r$, the rank-stabilized LoRA (or rsLoRA) method.

\begin{theorem}\label{mainthrm}
    Consider LoRA adapters which are of the form $\gamma_r BA$, where $B\in \R^{d_1\times r}, A\in \R^{r\times d_2}$ are initialised such that $B$ is $0_{d_1\times r}$, entries of $A$ are iid with mean $0$ and variance $\sigma_A$ not depending on $r$, and $\gamma_r \in \R$ is such that $\gamma_r\underset{r\to \infty}{\longrightarrow}0$. 
    
    In expectation over initialization, all adapters are rank-stabilized if and only if \begin{equation}
        \gamma_r\in\Theta_r(\frac{1}{\sqrt{r}}).
    \end{equation}
    In particular, the above holds at any point in the learning trajectory, and unless $\gamma_r\in\Theta_r(\frac{1}{\sqrt{r}}),$ there is unstable or collapsing learning for sufficiently large values of $r$. 
\end{theorem}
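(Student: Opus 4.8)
The plan is to collapse both conditions of Definition~\ref{def:rankstabilized} into a single scalar requirement on $\gamma_r$, by computing, in expectation over the initialization of $A$, the order of magnitude of a generic output entry of the adapter and of a generic entry of the gradient it back-propagates. Write the adapter as $z = \gamma_r B A x$ and set $h = Ax \in \R^r$. Since $B$ is initialized to $0_{d_1\times r}$ the adapter output is identically $0$ at initialization, so the interesting regime is a generic point of the learning trajectory where $B$ has already been updated. I would therefore first record the gradient with respect to $B$, namely $\nabla_B \loss = \gamma_r\, g\, h^\top$ with $g = \p \loss/\p z$, so that after one gradient step $B = -\eta\gamma_r\, g^{(0)} (h^{(0)})^\top$ and, after $t$ steps, $B$ is a sum of such rank-one updates indexed by $s$.

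Step 1, the forward pass. Substituting this form of $B$ into the output gives, for a generic entry, $z_i = -\eta\gamma_r^2 \sum_s g_i^{(s)} \langle h^{(s)}, h\rangle$. The decisive quantity is the length-$r$ inner product $\langle h^{(s)}, h\rangle = \sum_{k=1}^r h_k^{(s)} h_k$: because $A$ has iid mean-zero entries of variance $\sigma_A$ independent of $r$ and the inputs have $\Theta_r(1)$ moments per entry with $d_1,d_2$ fixed, a direct second-moment computation (e.g. $\langle h^{(s)},h\rangle = (x^{(s)})^\top A^\top A\, x$ with $A^\top A \approx r\sigma_A\I$) shows its typical magnitude is $\Theta_r(r)$. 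Hence $z_i = \Theta_r(\gamma_r^2 r)$, and its $m$-th moment is $\Theta_r\!\big((\gamma_r^2 r)^m\big)$. Condition~1 — that this $m$-th moment be $\Theta_r(1)$, simultaneously for all $m$ — therefore holds if and only if $\gamma_r^2 r = \Theta_r(1)$, i.e. $\gamma_r \in \Theta_r(1/\sqrt r)$.

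Step 2, the backward pass. The gradient into the input is $\p\loss/\p x_j = \gamma_r \sum_i g_i (BA)_{ij} = -\eta\gamma_r^2 \sum_i g_i \sum_s g_i^{(s)} \sum_k h_k^{(s)} A_{kj}$. The inner sum $\sum_k h_k^{(s)} A_{kj} = \sum_{j'} x_{j'}^{(s)} (A^\top A)_{j'j}$ again has magnitude $\Theta_r(r)$ (using $\Ex[A_{kj'}A_{kj}] = \sigma_A \delta_{jj'}$), so $\p\loss/\p x_j = \Theta_r(\gamma_r^2 r)$ and Condition~2 reduces to the identical requirement $\gamma_r \in \Theta_r(1/\sqrt r)$; together Steps~1 and~2 give the ``if and only if.'' The closing claim then follows by dichotomy: if $\gamma_r^2 r \to \infty$ the output and input-gradient entries diverge (unstable learning), while if $\gamma_r^2 r \to 0$ they vanish (collapsing learning), so whenever $\gamma_r \notin \Theta_r(1/\sqrt r)$ the adapter fails to be rank-stabilized for large $r$.

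I expect the main obstacle to be making the trajectory argument rigorous rather than heuristic: one must verify by induction on the step index $s$ that the forward signals $h^{(s)}$, the output-gradients $g^{(s)}$, and hence all the rank-one updates remain $\Theta_r(1)$ entrywise throughout training under the assumed scaling, so that the estimates $\langle h^{(s)},h^{(s')}\rangle = \Theta_r(r)$ stay self-consistent and are not corrupted by feedback of the adapter into later gradients. The hypothesis $\gamma_r \underset{r\to\infty}{\longrightarrow} 0$ is exactly what keeps this controllable, since it forces the adapter to remain a vanishing perturbation of the base forward and backward passes, so the base-model signals stay $\Theta_r(1)$ and seed the induction at $s=0$. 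Working \emph{in expectation over initialization}, as the theorem permits, removes the need for separate concentration inequalities for the length-$r$ inner products; the remaining care is in tracking the normalized cross-moments to confirm that it is precisely the exponent $1/2$, and not merely some exponent, that is forced.
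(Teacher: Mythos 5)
Your proposal is correct and follows essentially the same route as the paper's proof: you expand $B$ as a sum of $\mathcal{O}(\gamma_r)$ rank-one SGD updates hitting $A$, use $\Ex_{A}\!\left[A^\top A\right]=r\sigma_A \I$ to show both the adapter outputs and the back-propagated input gradients scale as $\Theta_r(\gamma_r^2 r)$, and conclude $\gamma_r\in\Theta_r(1/\sqrt{r})$ by the same dichotomy. The induction you flag as the "main obstacle" is exactly the paper's equation~\eqref{induction}, which records that the corrections to $B_n$ and $A_n$ are $\mathcal{O}_r(\gamma_r^2)$ and hence negligible under the hypothesis $\gamma_r\to 0$, just as you anticipated.
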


See appendix \ref{appendix:proof} for detailed proof. The above theorem suggests that the only way to ensure stability through the adapters for an arbitrary rank, is to scale the adapters with $\gamma_r\propto\frac{1}{\sqrt{r}}.$ At first glance, it may appear that the assumptions made for the definition of rank-stabilized adapters are not straightforward, since for a layer in the middle of the model the gradient to the adapter should depend on $r$ via adapters from later layers. Moreover, the activations should depend on $r$ from adapters of earlier layers. However, since the input data to the model does not depend at all on the rank $r$ of adapters, we can inductively apply the theorem from input layers to output layers with the forward propagation of activations. This results in an output loss gradient above the adapters which should not otherwise have reason to have magnitude unstable or collapsing in the limit of $r$. Following this gradient backpropagation, we can inductively apply the theorem in a similar fashion from output layers to input layers to pass through each adapter while maintaining the gradient magnitude in $\Theta_r(1)$.

We would like to point out that theorem \ref{mainthrm} pertains to the stability or collapse of the learning process in the limiting case of $r$, and does not make any statements about possible variations in performance of features learned with different ranks when learning is stable. The quality of the features during learning can also affect the magnitude of the gradient. This suggests that if the quality of features learned consistently depends on $r$, the assumptions of the theorem may not hold. Conversely, if the quality of learning is independent of rank, there would be no reason to use larger ranks given the increased resources required. Therefore, the extent to which these factors interplay, the implications of the theorem in practice, and potential benefits of using the corrected scaling factor, require experimental validation and examination.

\section{Experimental Results}\label{section:experiments}
Here we present experimental results that show the benefits of rsLoRA when compared to LoRA, and verify the practical consequences of the theorem stating that unless $\gamma_r\in\Theta_r(\frac{1}{\sqrt{r}}),$ there is unstable or collapsing learning for sufficiently large ranks. 

\begin{figure}[h]
    \centering
    \begin{minipage}{.55\linewidth}
        \includegraphics[width=1.0\linewidth]{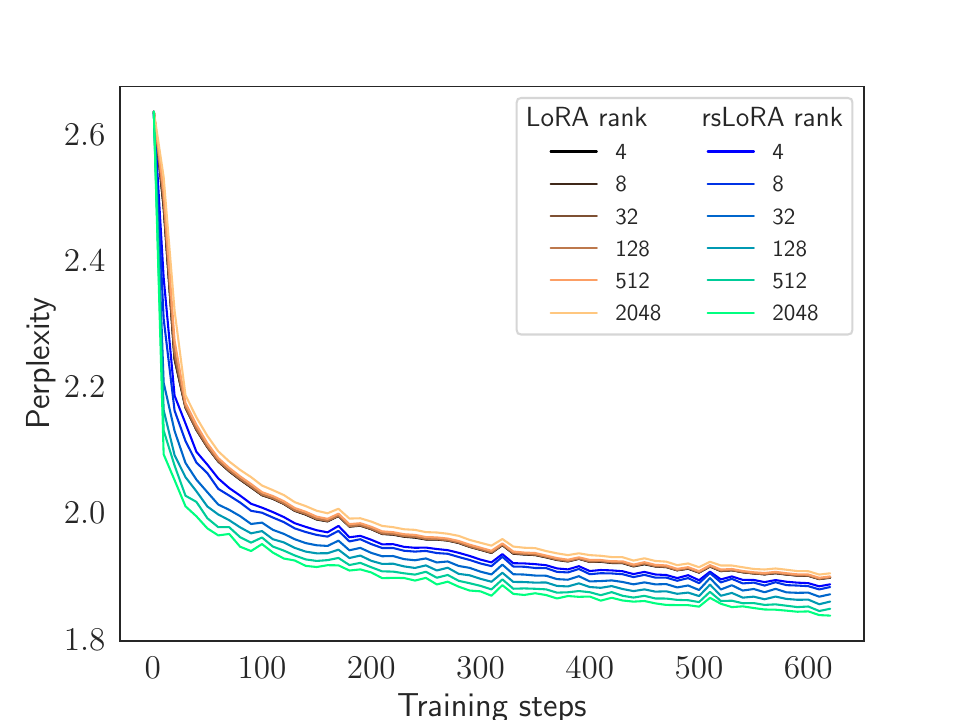}
    \end{minipage}
    \caption{\textbf{Fine-tuning perplexity for varying rank}.
    We vary ranks $r\in\{4,8,32,128,512,2048\}$ and compare the fine-tuning performance of LoRA and rsLoRA, where higher ranks are plotted using brighter colors. We observe that LoRA models, denoted in the copper color gradient, converge to a similar loss irrespective of the rank of the adapter, with some larger ranks even performing slightly worse. In contrast, we see that our proposed method rsLoRA, denoted in the blue-to-green color gradient, unlocks improved fine-tuning performance with higher ranks. 
        }
    \label{fig:training}
\end{figure}

To carry out our experiments with LoRA and rsLoRA, we choose a popular model and fine-tuning dataset: We fine-tune the Llama 2 model \cite{llama2} on 20,000 examples of the OpenOrca instruction tuning dataset \cite{orca}, using the AdamW optimizer \cite{adamw} with the HuggingFace default learning rate of .00005 on a constant learning rate schedule.
We add and optimize adapters in all linear (i.e., non-LayerNorm) attention and feed-forward MLP sub-modules of the transformer, since this has been shown to perform best with LoRA for a given parameter number budget (\cite{adalora} Appendix F). 
To observe the effects of varying adapter rank, we train with each ranks $r\in\{4,8,32,128,512,2048\}$. 
To directly measure the stability of learning throughout training, we track the average parameter gradient norm (Figure \ref{fig:gradient}).

The study \cite{deltatune} asserts that fine-tuning on an increased number of parameters tends to perform better, with full-model fine-tuning consistently outperforming parameter efficient methods. Therefore we have reason to conjecture that training with larger ranks should outperform training with smaller ranks. Indeed, as illustrated in figure \ref{fig:training}, we find that rsLoRA unlocks this performance increase for larger ranks, while LoRA's overly aggressive scaling factor collapses and slows learning with larger ranks such that there is little to no performance difference when compared to low ranks.

Validating our predictions, we illustrate in figure \ref{fig:gradient} that LoRA has collapsing gradients with higher ranks, whereas rsLoRA maintains the same gradient norm for each rank at the onset of training, while the norms remain approximately within the same order of magnitude throughout the training process.

\begin{figure}[h]
    \centering
    \begin{minipage}{.6\linewidth}
        \includegraphics[width=1.0\linewidth]{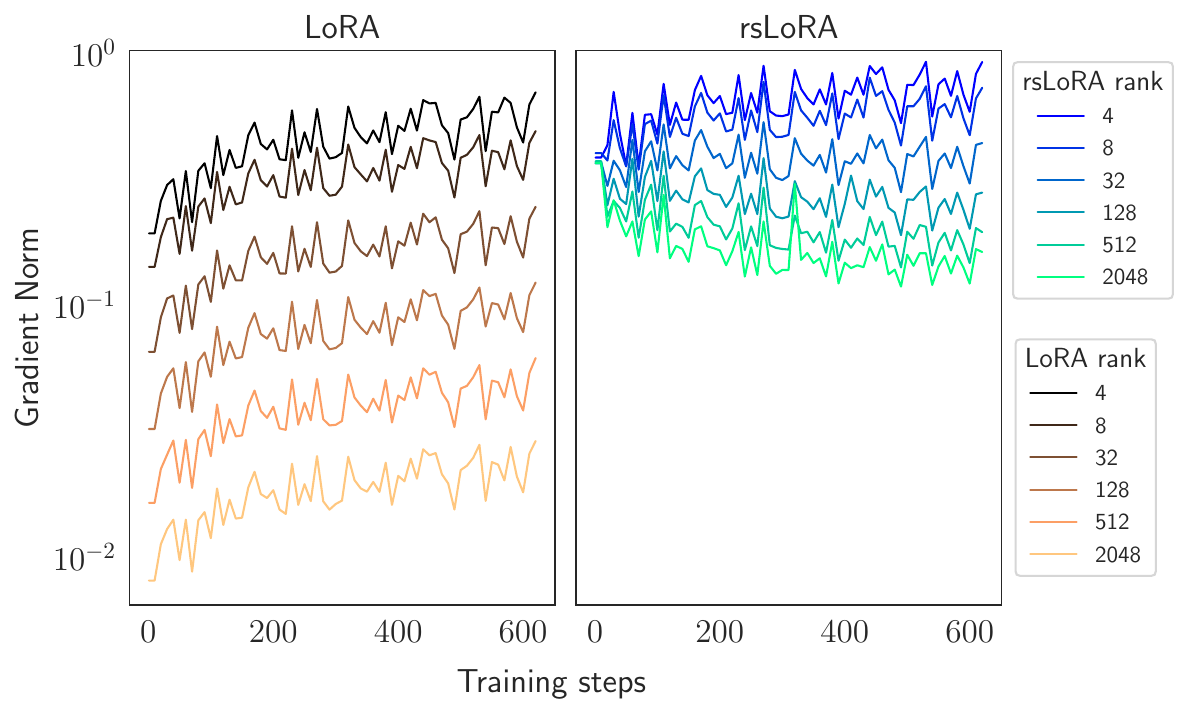}
    \end{minipage}
    \caption{\textbf{Gradient norm for varying rank}. 
    To evaluate learning stability, we vary ranks $r\in\{4,8,32,128,512,2048\}$ and compare the average parameter gradient norm of LoRA and rsLoRA, where higher ranks are line-plotted brighter in color. In the copper color gradient, LoRA has collapsing gradients with higher rank. In the blue to green color gradient, rsLoRA has the same gradient norm for each rank at the onset of training, while the norms remain approximately within the same order of magnitude throughout training.
        }
    \label{fig:gradient}
\end{figure}

To account for other potential variables, we run an extensive set of ablations, some of which we enumerate here (see appendix section \ref{appendix:ablations} for all ablations and details): 
\begin{itemize}
    \item We repeat the experiment with a different pre-trained model, dataset and optimizer, and observe the same pattern of results.
    \item We observe the same pattern of stability in gradient norms with SGD, to control for any additional stabilization effects that may have been caused by AdamW or other adaptive optimizers used in practice.
    \item We sweep through learning rates for LoRA with rank 4, and get that performance cannot match that of rsLoRA with high rank with the default learning rate, showing that the increased capacity is a requirement for improved learning, and the larger $\gamma_r$ of rsLoRA is not just acting as a learning rate boost.
    \item Using adapters in just the attention modules, which is commonly used with LoRA, gives similar results.
    \item If instead of reparameterizing the adapters with a scaling factor we only rank-correct the initialization of the adapter, we observe that learning becomes unstable during training for larger ranks.
    
\end{itemize}

\section{Conclusion}
In this paper we theoretically derived a rank correcting factor for scaling additive adapters when fine-tuning, and experimentally validated the approach. We showed that, in direct contrast to LoRA, learning with the proposed rank-stabilized adapters method remains stable and non-collapsing even for very large adapter ranks, allowing one to achieve better performance using larger ranks. This allows for 
unrestricted parameter efficiency, where one can use the maximum rank possible given an available memory budget to obtain the best fine-tuning performance. 
Our findings motivate further studies into the effects of the learning manifold intrinsic dimensionality in the context of fine tuning, since the implications of the original LoRA work inaccurately suggests and purveys the idea that very low ranks suffice when aiming to maximize fine-tuning performance. 

Possible avenues for future work include investigation of the proposed correction factor's use in the AdaLoRA method \cite{adalora}, which currently employs the same scaling factor for adapters as in conventional LoRA. Since any adapter is free to potentially be of full rank, the method may be more useful when built on rsLoRA instead of LoRA. Based on our results, we conjecture that AdaLoRA with rsLoRA should see increased fine-tuning performance, with substantial improvements in the case of higher rank budgets.

\bibliography{bibliography}
\bibliographystyle{icml2023_template/icml2023}

\newpage
\appendix
\onecolumn

\section{Proof of theorem \ref{mainthrm}}\label{appendix:proof}

\begin{theorem*}
Let the LoRA adapters be of the form $\gamma_r BA$, where $B\in \R^{d_1\times r},A\in \R^{r\times d_2}$ are initialised such that $B$ is initially $0_{d_1\times r}$, entries of $A$ are iid with mean $0$ and variance $\sigma_A$ not depending on $r$, and $\gamma_r \in \R$ is such that $\gamma_r\underset{r\to \infty}{\longrightarrow}0$. 
    
    In expectation over initialization, assuming the inputs to the adapter are iid distributed such that the $m$'th moment is $\Theta_r(1)$ in each entry, we have that the $m$'th moment of the outputs of the adapter is $\Theta_r(1)$ in each entry if and only if $$\gamma_r\in\Theta_r(\frac{1}{\sqrt{r}}).$$
    
    In expectation over initialization, assuming the loss gradient to the adapter outputs are $\Theta_r(1)$ in each entry, we have that the loss gradients into the input of the adapter are $\Theta_r(1)$ in each entry if and only if $$\gamma_r\in\Theta_r(\frac{1}{\sqrt{r}}).$$
    In particular, the above holds at any point in the learning trajectory if the assumptions do, and unless $\gamma_r\in\Theta_r(\frac{1}{\sqrt{r}}),$ there is unstable or collapsing learning for $r$ large enough.
\end{theorem*}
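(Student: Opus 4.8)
The plan is to reduce both the forward and backward conditions to a single question about the order of magnitude of a sum of $r$ terms, and to pin that order down using the mean-zero structure of $A$. First I would fix notation: write the adapter as $z=\gamma_r B(Ax)$ with input $x\in\R^{d_2}$, intermediate $h=Ax\in\R^{r}$, and output $z\in\R^{d_1}$, where $d_1,d_2$ are fixed and do not grow with $r$. For the forward claim, note that $h_k=\sum_{j=1}^{d_2}A_{kj}x_j$ is a sum of a fixed number of $\Theta_r(1)$ terms and is mean-zero over the initialization of $A$, with $\mathbb{E}[h_k]=0$ and $\mathrm{Var}(h_k)=\sigma_A\sum_j x_j^2=\Theta_r(1)$; since the rows of $A$ are independent, the $h_k$ are i.i.d. Then each output entry is $z_i=\gamma_r\sum_{k=1}^{r}B_{ik}h_k$, a sum over the $r$ hidden indices of mean-zero terms with coefficients $B_{ik}=\Theta_r(1)$. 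Computing $\mathbb{E}[z_i^2]=\gamma_r^2\sum_k B_{ik}^2\,\mathrm{Var}(h_k)=\gamma_r^2\,\Theta_r(r)$, and more generally $\mathbb{E}[|z_i|^m]=(\gamma_r\sqrt r)^m\,\Theta_r(1)$, one reads off that the $m$-th moment is $\Theta_r(1)$ if and only if $\gamma_r\sqrt r=\Theta_r(1)$, i.e. $\gamma_r\in\Theta_r(1/\sqrt r)$.

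The backward direction is handled symmetrically. The gradient the adapter passes to its input is $\partial\loss/\partial x=\gamma_r A^{\!\top}B^{\!\top}g$, where $g=\partial\loss/\partial z$ is $\Theta_r(1)$ per entry. Setting $u=B^{\!\top}g\in\R^r$, each $u_k=\sum_{i=1}^{d_1}B_{ik}g_i$ is again a fixed-length sum and hence $\Theta_r(1)$, while $(\partial\loss/\partial x)_j=\gamma_r\sum_{k=1}^{r}A_{kj}u_k$ is a sum over the $r$ index of the mean-zero variables $A_{kj}$. The identical second-moment estimate gives magnitude $\gamma_r\sqrt r\,\Theta_r(1)$, so the input gradients are $\Theta_r(1)$ exactly when $\gamma_r\in\Theta_r(1/\sqrt r)$, completing both equivalences.

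For the final statement I would then assemble these into a layerwise induction and read off instability. Because the model's input data has fixed dimension and is independent of $r$, the first adapter meets the forward hypothesis; applying the forward equivalence propagates $\Theta_r(1)$ activations through every adapter from input to output precisely when $\gamma_r\in\Theta_r(1/\sqrt r)$, which in turn makes the top-level loss gradient $\Theta_r(1)$, and the backward equivalence then propagates $\Theta_r(1)$ gradients back down through every adapter. This holds at any trajectory point once one checks the standing hypothesis that the entries of $A,B$ stay $\Theta_r(1)$: at initialization $A$ has $\Theta_r(1)$ entries and $B=0$, and since $\partial\loss/\partial B=\gamma_r g h^{\!\top}$ and $\partial\loss/\partial A=\gamma_r (B^{\!\top}g)x^{\!\top}$ are built from the $\Theta_r(1)$ activations and gradients just analyzed, each update keeps the entries $\Theta_r(1)$. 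Finally, if $\gamma_r\notin\Theta_r(1/\sqrt r)$ then $\gamma_r\sqrt r\to\infty$ forces the moments and gradients to diverge (unstable learning), while $\gamma_r\sqrt r\to 0$ forces them to vanish (collapsing learning) for $r$ large enough, which is the asserted dichotomy.

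The hard part will be justifying the $\Theta_r(\sqrt r)$ (rather than $\Theta_r(r)$) scaling of the inner sums at a general point in the trajectory, where $B$ is no longer independent of $A$: after training, $B$ is assembled from gradients that themselves contain $A$, so one must rule out a coherent $\Theta_r(r)$ ``mean'' contribution to $\sum_k B_{ik}h_k$. The mechanism I would rely on is the zero initialization of $B$ together with the fact that every update to $B$ carries an explicit factor of $\gamma_r$ (via $\partial\loss/\partial B=\gamma_r g h^{\!\top}$): any correlated, non-cancelling contribution to the sum therefore enters suppressed by an extra power of $\gamma_r\to 0$ relative to the incoherent mean-zero term, so the $\sqrt r$ scaling from the independent part dominates and the expectation over initialization is controlled. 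Making this suppression quantitative --- tracking that cross-correlations between $B$ and $h$ (and between $A$ and $u$ in the backward pass) remain lower order --- is the technical core of the argument.
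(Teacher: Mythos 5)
There is a genuine gap, and it sits exactly where you flagged it. Your forward and backward computations model $B$ as a matrix with $\Theta_r(1)$ entries that are effectively independent of $A$, so that $\sum_{k=1}^r B_{ik}h_k$ is an incoherent mean-zero sum of magnitude $\sqrt r$. Under the theorem's hypotheses this premise fails at \emph{every} point of the learning trajectory: $B_0=0_{d_1\times r}$ and each SGD update to $B$ is $-\eta\gamma_r v_n x_n^T A_n^T$, so after $n$ steps one has $B_n=\bigl(-\eta\gamma_r\sum_{k=0}^{n-1}v_kx_k^T+\mathcal{O}_r(\gamma_r^2)\bigr)A_0^T$ while $A_n=A_0(1+\mathcal{O}_r(\gamma_r^2))$. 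Thus the entries of $B$ are $\mathcal{O}_r(\gamma_r)$, never $\Theta_r(1)$ (your own induction claim that updates keep $B$'s entries $\Theta_r(1)$ contradicts the explicit $\gamma_r$ factor in $\partial\mathcal{L}/\partial B$ that you yourself cite), and, more importantly, $B$ is built directly out of $A_0^T$, hence maximally correlated with $A$. Consequently $\gamma_r B_nA_n\approx-\eta\gamma_r^2\bigl(\sum_k v_kx_k^T\bigr)A_0^TA_0$, and since $\mathbb{E}_{A_0}[A_0^TA_0]=r\sigma_A\mathrm{I}$, the sum over the $r$ hidden indices adds \emph{coherently}, contributing a full factor of $r$ rather than $\sqrt r$: the adapter output and the input gradient both scale as $\Theta_r(\gamma_r^2 r)$.

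Your proposed resolution of the correlation problem is therefore backwards. You argue that the correlated contribution carries an extra power of $\gamma_r$ and is dominated by the incoherent part coming from $\Theta_r(1)$ entries of $B$; in fact no such $\Theta_r(1)$ incoherent part exists (all of $B$ carries the $\gamma_r$ suppression), and the correlated, coherent term is the \emph{leading} term, dominating any incoherent fluctuation since $\gamma_r^2 r\gg\gamma_r^2\sqrt r$. Your final answer nonetheless agrees with the theorem by an algebraic coincidence: your condition $\gamma_r\sqrt r\in\Theta_r(1)$ and the correct condition $\gamma_r^2 r\in\Theta_r(1)$ define the same set of scalings, namely $\gamma_r\in\Theta_r(1/\sqrt r)$. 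But the ``technical core'' you defer --- showing the incoherent $\sqrt r$ scaling dominates at a general trajectory point --- cannot be carried out, because it is false. The correct completion is the paper's route: solve the SGD recursion explicitly from the zero initialization of $B$, take the expectation over $A_0$ using $\mathbb{E}_{A_0}[A_0^TA_0]=r\sigma_A\mathrm{I}$, and read off that the forward moments scale as $\Theta_r((\gamma_r^2r)^m)$ and the backward gradients as $\Theta_r(\gamma_r^2 r)$, from which the equivalence with $\gamma_r\in\Theta_r(1/\sqrt r)$ follows.
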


\begin{proof}[Proof of theorem \ref{mainthrm}]
\phantom\qedhere

Let $f(x)=\gamma_rBAx$, and $\mathcal{L}(f(x))$ denote the loss. let $B_n,A_n$ denote $B,A$ after the $n'th$ SGD update on input $x_n$ with learning rate $\eta$. Recall that $B_0=0_{d\times r}$, and see that for $v_n=\nabla_{f(x_n)}\mathcal{L}(f(x_n))$:
\begin{equation}
\begin{split}\label{grads}
    &\nabla_{B_n} \mathcal{L}=\gamma_rv_nx_n^TA_n^T,\\
    &\nabla_{A_n} \mathcal{L}=\gamma_rB_n^Tv_nx_n^T.
\end{split}
\end{equation}
First by induction for $n\geq 1$ check that 
\begin{equation}
\begin{split}\label{induction}
    &B_n=(-\eta\gamma_r\sum_{k=0}^{n-1}v_kx_k^T + \mathcal{O}_r(\gamma_r^2))A_0^T\\
    &A_n=A_0(1+\mathcal{O}_r(\gamma_r^2)).
\end{split}
\end{equation}

It follows that after $n$ updates we have 
\begin{equation}
\gamma_rB_nA_n=-\gamma_r^2\eta\sum_{k=0}^{n-1}v_kx_k^TA_0^TA_0 + \mathcal{O}_r(\gamma_r^3)A_0^TA_0
\end{equation}

If we take the expectation of the initialization $A_0$, noting that $$\E_{A_0}(A_0^TA_0)=r\sigma_A\I_{dxd},$$ we get

\begin{equation}
    \E_{A_0}(\gamma_rB_nA_n)= -\gamma_r^2r\sigma_A\eta\sum_{k=0}^{n-1}v_kx_k^T + \mathcal{O}_r(\gamma_r^3r).
\end{equation}

(Backward pass:) On a new input $x_n$, the gradient
\begin{equation}
    \nabla_{x_n}\mathcal{L}(\gamma_rB_nA_nx_n)=-\gamma_r^2r\sigma_A\eta\sum_{k=0}^{n-1}x_kv_k^Tv_n + \mathcal{O}_r(\gamma_r^3r)\in\Theta_r(\gamma_r^2r).
\end{equation}

(Forward pass:) On a new input $x_n$, use the assumption that the inputs are iid with moments $\Theta_r(1)$ to note 
$\E_x((x_k^Tx_n)^m)\in\Theta_r(1)$. Then, the assumptions $\gamma_r\to 0$ and the above numbered equation give
\begin{equation}
\begin{split}
    \E_{x,A_0}((\gamma_rB_nA_nx_n)^m)&= (-\gamma_r^2r\sigma_A\eta)^m\sum_{k=0}^{n-1}v_k^m\E_x((x_k^Tx_n)^m) + \mathcal{O}_r((\gamma_r^3r)^m)\in\Theta_r((\gamma_r^2r)^m).
\end{split}
\end{equation}
For this to not be unstable or collapse in the limit of $r\to \infty$, we need 
$\Theta_r((\gamma_r^2r)^m)=\Theta_r(1)$ or equivalently $$\gamma_r\in\Theta_r(\frac{1}{\sqrt{r}}).$$
\end{proof}

\section{Ablations and additional experiments}\label{appendix:ablations}
For all experiments including those in the main text, we use a batch size of 32 and a context window size of 512.
\subsection{Stability with SGD}
To rule out any stabilizing effects from AdamW, and since the theorem is done for SGD, we check the parameter gradient norm stability for rsLoRA vs LoRA when training with SGD. We pick a high but stable learning rate of .0001 and observe a similar pattern of stability as in figure \ref{fig:gradient}:
\begin{figure}[h]
    \centering
    \begin{minipage}{.55\linewidth}
        \includegraphics[width=1.0\linewidth]{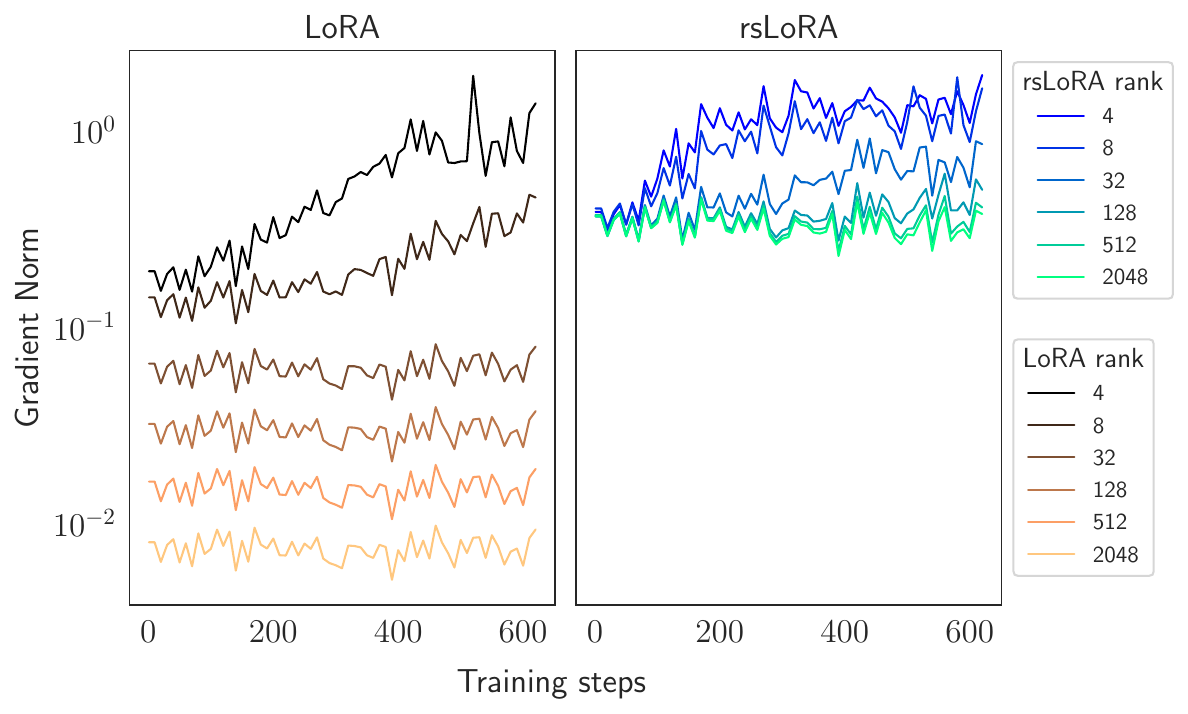}
    \end{minipage}
    \caption{\textbf{Gradient norm for varying rank with SGD}. 
    To observe learning stability, we vary ranks $r\in\{4,8,32,128,512,2048\}$ and compare the average parameter gradient norm of LoRA and rsLoRA, where higher ranks are line-plotted brighter in color. We also shade in the region in between the minimum and maximum rank for each method in its corresponding color scheme. In the copper color gradient, LoRA gradient has collapsing gradients with higher rank. In the blue to green color gradient, rsLoRA has the same gradient norm for each rank at the start of training, and the norms remain within the same order of magnitude throughout training. We observe even greater stability of rsLoRA with SGD, as all ranks stay at nearly the same norm value for about 100 steps into the learning trajectory.
        }
    \label{appendix:fig:sgdgradient}
\end{figure}

\subsection{Change of Model/Optimizer/Dataset}

To show the generality of the results, we re-run training with a different model, optimizer, and fine-tuning dataset. We train the 6 billion parameter GPT-J \cite{gptj} on the GSM8k benchmark dataset, which consists of grade school math word problems \cite{gsm8k}. We use the adaptive optimizer Adafactor \cite{adafactor}. The results plotted below show that our findings do indeed translate to this completely different setting.

\begin{figure}[!h]
    \centering
    \begin{minipage}{.45\linewidth}
        \includegraphics[width=1.0\linewidth]{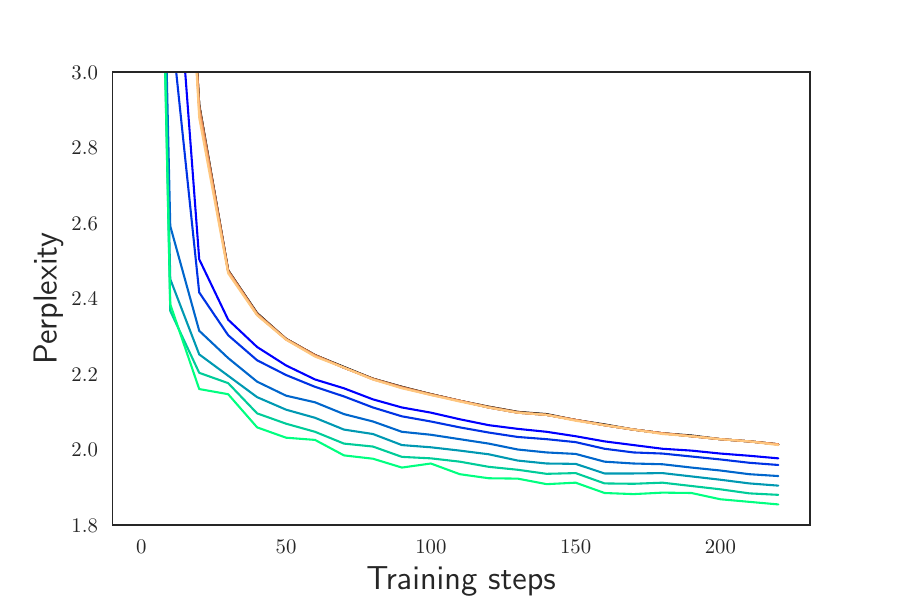}
    \end{minipage}
    \begin{minipage}{.45\linewidth}
        \includegraphics[width=1.0\linewidth]{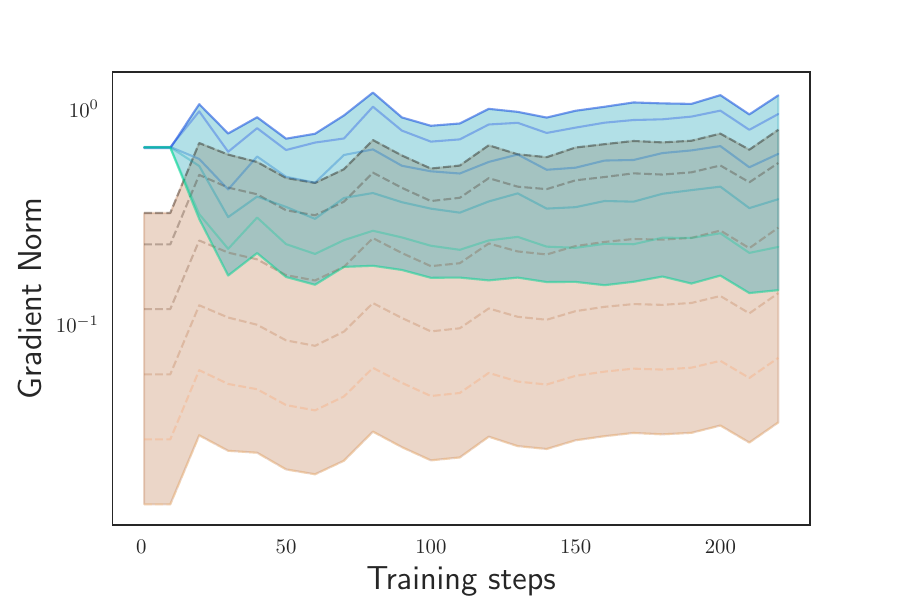}
    \end{minipage}
    \caption{\textbf{GPT-J finetuned on GSM8k with Adafactor}. We vary ranks $r\in\{4,8,32,128,512,2048\}$. Blue-green gradient is used for rsLoRA, and copper for LoRA, where for each, brighter color indicates higher rank. In the gradient norm plot, we also shade in the region in between the minimum and maximum rank for each method in its corresponding color scheme. We see the same pattern of results, where training for larger ranks does not change performance with LoRA (in fact, here all LoRA training curves overlap to look like a single run) but unlocks additional fine-tuning performance with rsLoRA. We also see that the parameter gradient norm for each rank is similar with rsLoRA and different orders of magnitude with LoRA, which collapses learning at higher ranks.
    }
    \label{fig:gptjtraining}
\end{figure}

\newpage

\subsection{Scaling at Initialization Only}
Here we examine training performance where instead of reparameterizing the adapters with a scaling factor, we only rank-correct the initialization of the adapter. We examine this in two ways by looking at rank 4 and 2048: First we do not use any scaling factor at all and just scale the initialization of the matrix $A$ by $\frac{1}{\sqrt{r}}$.
With this, although the gradient norms are the same magnitude at initialization for rank 4 and 2048, we observed that training perplexity becomes unstable during training for rank 2048.

Secondly, we scale the initialization of the matrix $A$ by $\frac{1}{\sqrt{r}}$ but also use the LoRA scaling factor of $\frac{1}{r}$ to scale adapters. This shows the same sub-optimal training as with LoRA, and that the initialization cannot correct for this.

\begin{figure}[!h]
    \centering
    \begin{minipage}{.45\linewidth}
        \includegraphics[width=1.0\linewidth]{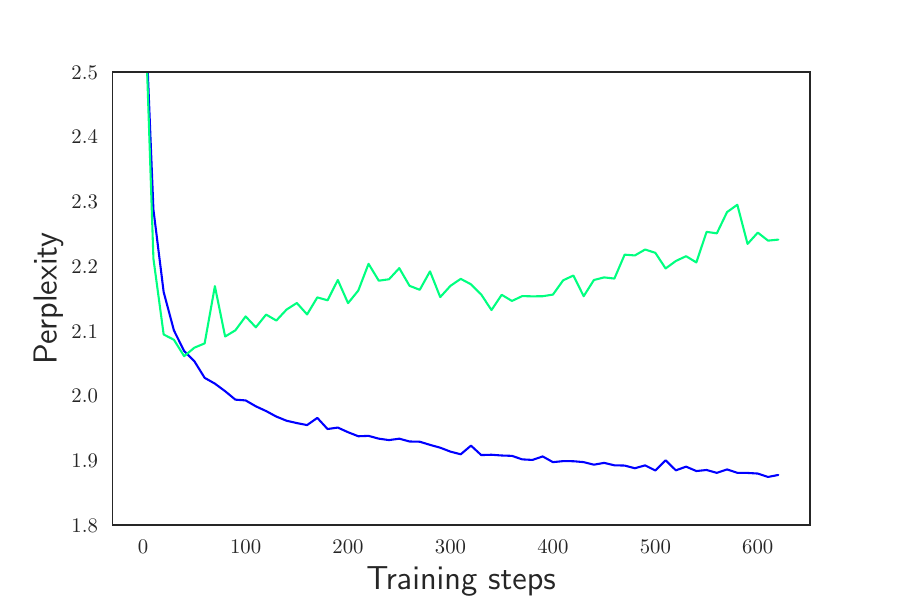}
    \end{minipage}
    \begin{minipage}{.45\linewidth}
        \includegraphics[width=1.0\linewidth]{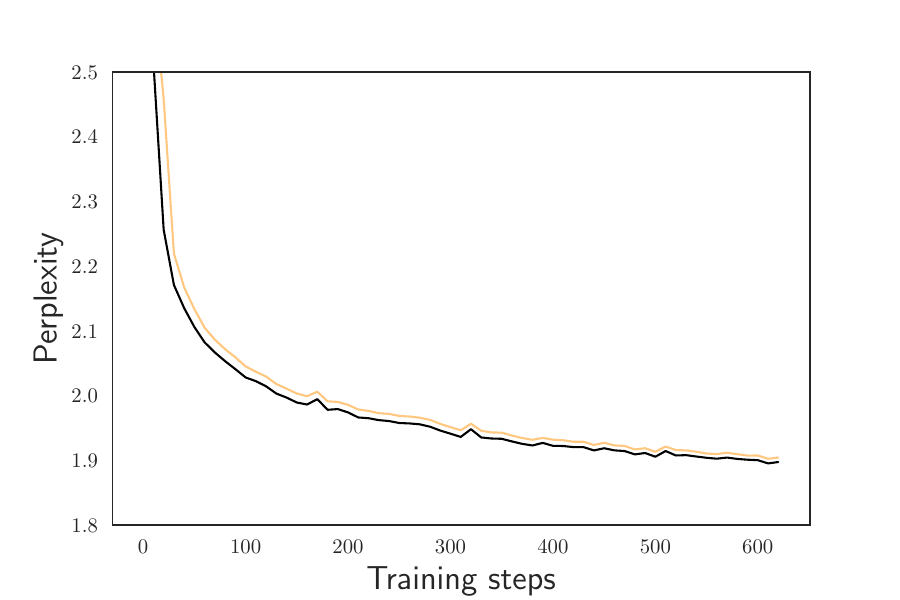}
    \end{minipage}
    \caption{\textbf{Training with scaled initialization} for ranks 4 (darker) and 2048 (brighter). On the left we plot the result of not using any scaling factor at all and just scaling the initialization by $\frac{1}{\sqrt{r}}$. We observe that learning becomes unstable during training for rank 2048. On the right we plot the result of scaling the initialization but use the LoRA adapter scaling factor of $\frac{1}{r}$. This shows the same sub-optimal learning for high rank as training with LoRA, and that the initialization cannot correct for this.}
    \label{fig:init}
\end{figure}

\newpage

\subsection{Learning Rate for Low Rank is Not Enough}
To definitively show that higher ranks are required for better learning, and that the larger $\gamma_r$ of rsLoRA is not just acting as a learning rate boost, we sweep through learning rates in $\{5\times 10^{n}:-5<n<-1\}\cup\{1\times 10^{n}:-5<n<-1\}$ for LoRA with rank 4, and compare to rank 2048 with rsLoRA. We get that the best performance with rank 4 adapters (at learning rate $.0005$), reaching 1.863 perplexity, cannot match that of rsLoRA with high rank with the default learning rate, which reaches 1.836 perplexity, showing that rsLoRA and the increased capacity of higher ranks is a requirement for improved learning.

\subsection{Adapters in Attention Modules Only}

We ablated using adapters in all modules and observed that putting rsLoRA in the attention modules only results in a similar effect; The larger rank still trains better for rsLoRA and does not for LoRA, and the gradient norms of trained parameters are still collapsing in the same way with LoRA when rank is large. 

\begin{figure}[!h]
    \centering
    \begin{minipage}{.45\linewidth}
        \includegraphics[width=1.0\linewidth]{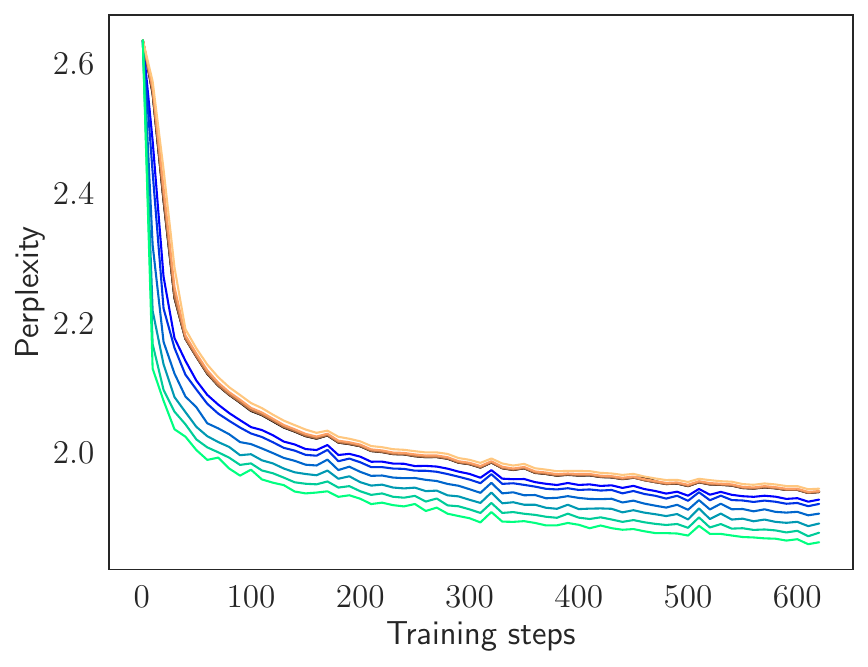}
    \end{minipage}
    \begin{minipage}{.45\linewidth}
        \includegraphics[width=1.0\linewidth]{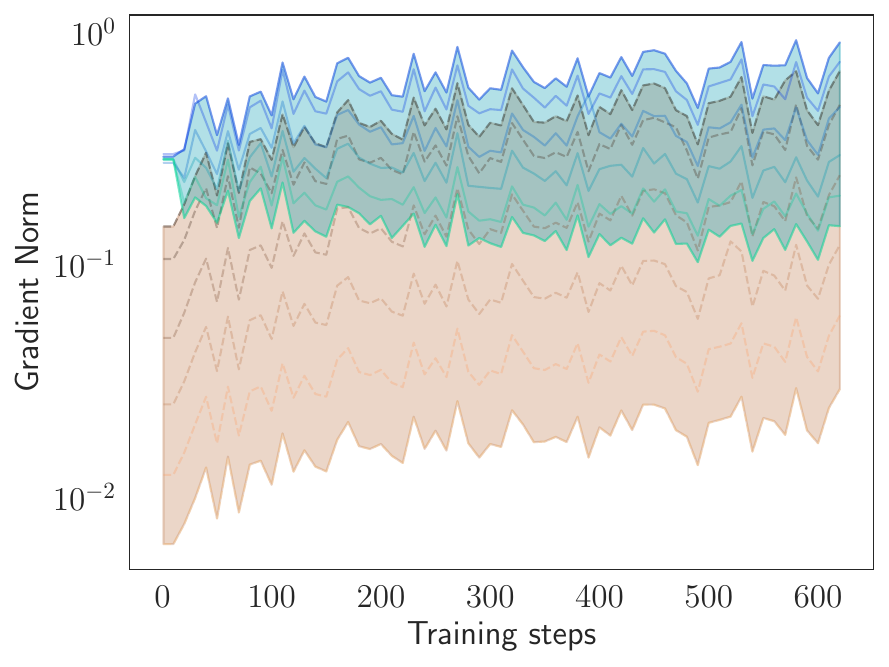}
    \end{minipage}
    \caption{\textbf{Llama 2 7B with adapters only in attention modules}. We vary ranks $r\in\{4,8,32,128,512,2048\}$. Blue-green gradient is used for rsLoRA, and copper for LoRA, where for each, brighter color indicates higher rank. In the gradient norm plot, we also shade in the region in between the minimum and maximum rank for each method in its corresponding color scheme. We observe very similar results as the rest of our experiments, where we train with adapters in all MLP and attention modules.
    }
    \label{fig:attnonly}
\end{figure}

\subsection{Other Scaling Factors than in LoRA and rsLoRA}
To confirm instability when using a scaling factor larger than $\frac{1}{\sqrt r}$, we checked the setting of 
\begin{equation}
    \gamma_r=\frac{1}{r^{1/4}}.
\end{equation}

Also, to see if slowdown of learning turns into outright collapse when we reduce $\gamma_r$ more than in LoRA, we check setting 
\begin{equation}
    \gamma_r=\frac{1}{r^{2}}.
\end{equation}

\begin{figure}[h]
    \centering
    \begin{minipage}{.55\linewidth}
        \includegraphics[width=1.0\linewidth]{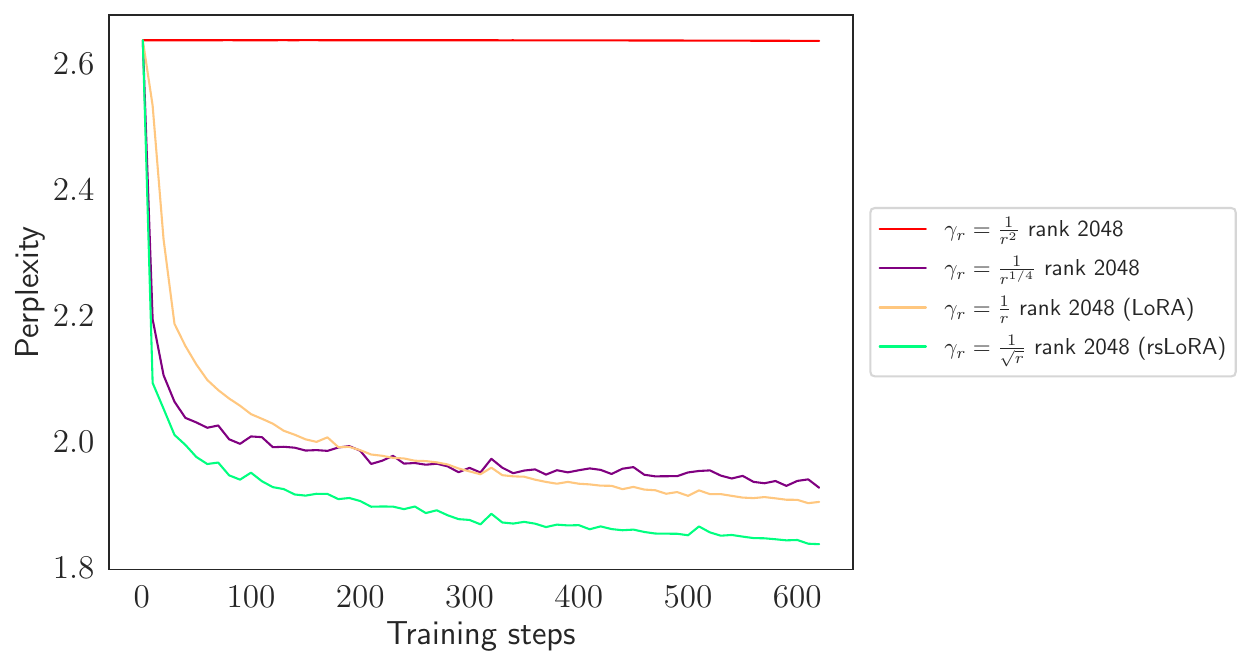}
    \end{minipage}
    \caption{\textbf{Other scaling factors compared to rsLoRA and LoRA} for rank 2048.}
    \label{fig:otherscaling}
\end{figure}

\newpage

\subsection{Activations}

Since theorem \ref{mainthrm} made statements about the moments of the activations, we inspected the averaged first two moments of post-adapter pre-LayerNorm activations. We found that both LoRA and rsLoRA seemed to have relatively stable and non-collapsing activations when averaged over all adapters. This could be due a number of things, including: aspects of the transformer's non-adapter architecture like LayerNorm, the fact that activations in either case have very small moments, or averaging out effects from averaging all activations and layers. We do note however that the higher ranks (512, 2048) for rsLoRA can be observed to level off later in training for both moments, and although we do not examine this aspect further, it may be that this is a consequence of rank-stabilization that is helpful for training with higher ranks.

\begin{figure}[!h]
    \centering
    \hspace{2.1cm}
    \begin{minipage}{.45\linewidth}
        \includegraphics[width=1.0\linewidth]{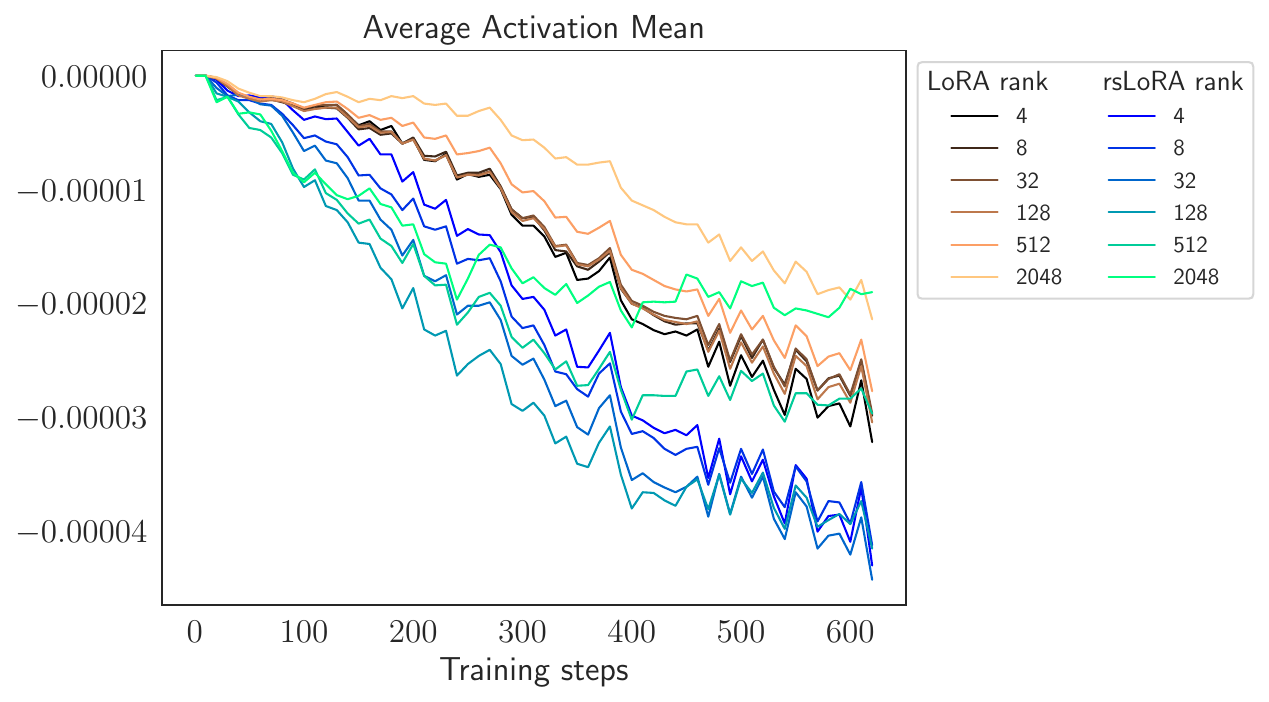}
    \end{minipage}
    \newline
    \centering
    \begin{minipage}{.45\linewidth}
        \includegraphics[width=1.0\linewidth]{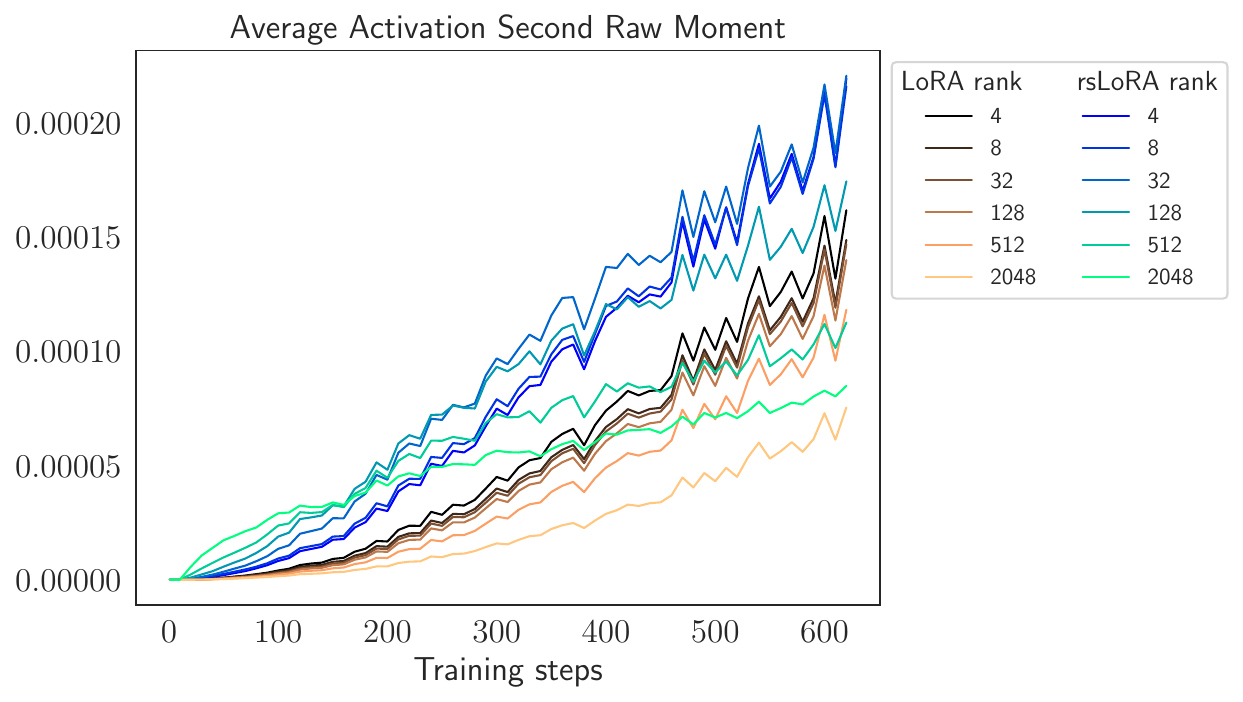}
    \end{minipage}
    \caption{\textbf{Average activation first and second moments over training.} Here we plot the average activation moments with LoRA and rsLoRA. We see that the average moments for both methods look benign and small in magnitute. One does observe a difference with high ranks in rsLoRA, where the for both moments, the increase in magnitute over training steps levels off considerably.
    }
    \label{fig:actmoms}
\end{figure}


\end{document}